 \numberwithin{equation}{section}
\newtheorem{theorem}{Theorem}[section]
\journal{Nonlinear Analysis: Real World Applications}
\begin{document}
\begin{frontmatter}
\title{Estimation of Parameters of an Infectious Disease Model using Neural Networks}
\author[coauth]{V. Sree Hari Rao\corref{cor3} }
\ead{vadrevus@mst.edu}
 \address[coauth]{Department of Mathematics \& Statistics, Missouri University of Science and Technology, Rolla, MO 65409-0020, USA}
\cortext[cor3]{On leave from Jawaharlal Nehru Technological University, Hyderabad - 500 085, India}
\author[focal]{M.~Naresh Kumar\corref{cor1}}
 \ead{nareshkumar\_m@nrsc.gov.in}
 \cortext[cor1]{Principal Corresponding Author}
 \cortext[cor1]{Tel.: +91 40 23884388; Fax.: +91 40 23884437}
 \address[focal]{Software Group, National Remote Sensing Agency (ISRO), Hyderabad, 500037, India}
\cortext[cor3]{Dedicated to Professor V. Lakshmikantham on the occasion of his 85th birthday}
\cortext[cor1]{Copyright $\copyright$ 2010 Elsevier B.V. Accepted for Publication in Nonlinear Analysis: Real World Applications.  DOI: doi:10.1016/j.nonrwa.2009.04.006}
\markboth{\scriptsize Copyright $\copyright$ 2010 Elsevier B.V. Accepted for Publication in Nonlinear Analysis: Real World Applications.  DOI: doi:10.1016/j.nonrwa.2009.04.006} {\scriptsize \thepage}
\begin{abstract}
In this paper, we propose a realistic mathematical model taking into account the mutual  interference among the interacting populations. This model attempts to describe the control (vaccination) function as a function of the number of infective which is an improvement over the existing susceptible-infective epidemic models.
Regarding the growth of the epidemic as a nonlinear phenomenon we have developed a neural network architecture to estimate the vital parameters associated with this model. This architecture is based on a recently developed new class of neural networks known as co-operative and supportive neural networks and it involves a preprocessing of the input data and this renders an efficient estimation of the rate of spread of the epidemic. It is observed that the proposed new neural network outperforms a simple feed forward neural network and polynomial regression.

\end{abstract}
\begin{keyword}
Dynamical systems; Mutual interference; Polynomial regression; Epidemics; K-means clustering; Cooperative and supportive neural networks.
\end{keyword}

\end{frontmatter}
\section{Introduction}
Simple epidemic models describe the spread of an infectious disease among individuals within the population. After a period, two groups are formed in the population: those who have not acquired the disease but are likely to contract (susceptible population) and those who are infected (infective population), capable of spreading the disease. The fundamental characteristic of the epidemic is that susceptible individuals contract the disease only by getting in contact with the infective individuals. Also the average time constant or the latency time of the disease depends on the nature of the epidemic. Further, cured individuals do not contract the disease again during the same period. Generally, such epidemics will be treated by appropriate vaccination and/or other efforts, which may be viewed as control efforts to contain the spread of disease. The vaccination effort is regarded as a parameter in the mathematical models. The estimation of the rate at which susceptible individuals become invectives is generally a difficult question for the field scientists engaged in this activity. So it is desirable to have a realistic mathematical model that describes the dynamical interactions between these two classes of populations, such as the susceptible and the infective. For a few earlier studies on epidemiological problems we refer the readers to (\cite{2,3,4}, \cite{7, 13, 16,18,19}). This is the starting point for our investigations in this paper and accordingly, we propose a mathematical model to describe the simple dynamics of the interacting groups of the populations.
 
We introduce the nonlinearities in the interacting populations through mutual interference parameters. Our main result provides suitable ranges for these parameters. Regarding the growth of the epidemic as a nonlinear phenomenon, we have developed a neural net work architecture  to estimate the vital parameters of the model. Convergence of the neural network training is an important problem in case of excess data samples being available. We propose a new methodology to train a neural network for data intensive applications by employing clustering techniques.

The present paper is organized as follows. In Section 2, we describe our mathematical model. In Section 3, we derive conditions that ensure the existence and uniqueness of continuable solutions for the model equations. A question of importance to the field scientists is the determination of the rate of spread of the epidemic and we utilize a recently developed neural network architecture [17] to estimate this rate. A related algorithm and the new neural network architecture with K-means clustering are discussed in Section 4. Simulation results are presented in Section 5. The rate of spread of the epidemic is determined using the neural network architecture developed in Section 4 and these results form the content of Section 6.  Conclusions and discussion follow in Section 7.
\section{The Model}
Our model is based on the consideration that the process of the spread of the susceptible population turning into infective population is a nonlinear phenomenon. The following are the underlying biological principles.
\begin{enumerate}
\item	The total population is fixed and initially every individual is susceptible to the disease.
\item	The disease is spread through the direct contact of susceptible individuals with the infective individuals.
\item	Every individual who has contracted the disease and has recovered is regarded as immune.
\end{enumerate}

These principles when translated into the mathematical framework yield the following system of ordinary differential equations and these equations describe the dynamics of the interacting populations, 
\begin{eqnarray}
\dot{x_1} = - \beta x_{1}^{m_1} x_{2}^{m_2} -S(x_1) \\ \nonumber
\dot{x_2}= \beta x_{1}^{m_1} x_{2}^{m_2} - \gamma P(x_2)
\end{eqnarray}
where , $\dot=\frac{d}{dt}$, $x_1$  represents the number of susceptible individuals,  $x_2$ represents the number of infective individuals, $\beta$ is the infection parameter, $\gamma$ denotes a parameter related to the average time constant of the disease. The function $S$ describes the control input and it is assumed to be proportional to the vaccination effort. Also, the function $P$ corresponds to those individuals who have contracted the disease and recovered (regarded as individuals with acquired immunity). Further the functions $S$ and $P$ are assumed to satisfy the following mathematical conditions:
\begin{eqnarray}
S(0) \geq 0, \frac{dS}{dx_1}\geq 0; P(0)=0,  \frac{dP}{dx_2}\geq 0
\end{eqnarray}
The parameters $m_1$   and   $m_2$ appearing in (2.1) represent the indexes of the interacting populations arising out of the non-linear considerations of the epidemic phenomenon. Prototypes of the model (2.1), in which the nonlinear interactions $m_1=m_2=0$  and when a fixed (constant) control effort is invoked, reduce the above model equations to
\begin{eqnarray}
\dot{x_1} = - \beta x_{1} x_{2} -u \\ \nonumber
\dot{x_2}= \beta x_{1} x_{2} - \gamma x_2
\end{eqnarray}
and this model has been studied in  (\cite{1}). More recent studies on the better vaccination efforts may be found in the recent papers (\cite{12, 13, 16,18}).

Clearly the present model (2.1) is more realistic as the control effort essentially varies with regard to the size of susceptible population rather than being fixed. The model (2.3) rests on the simple considerations that if each infected individual   converts one susceptible into infective, then the total number of susceptible population converted into infective population would be  and this describes the interactions between these populations (simple nonlinear interactions). Our model instead of considering simple nonlinear interactions also addresses the sub linear interactions between the two populations.
\section{Existence and Uniqueness of Continuable Solutions}
In this section, we propose to consider the model equations (2.1) and examine the qualitative properties of its solutions. From the biological point of view, the system (2.1) describes the dynamical interactions among the two classes of populations such as the susceptible and the infective populations. Clearly, the qualitative study of solutions of this system depends on ensuring conditions that are sufficient to guarantee the existence of solutions for initial value problems associated with (2.1). Usually this existence of solutions is determined in a finite interval and the solutions are continued on their maximal intervals. This approach yields continuable solutions. Often the inherent dynamics of the system requires one to pick up a specific point on the trajectory and to move in both forward and backward directions. From this discussion, a mathematical treatment of the model equations requires one to obtain conditions for the existence and uniqueness of continuable solutions for the system (2.1). An  analogy with ecological problems would render one to regard the community of infective individuals as a sub-population affecting the survival of the susceptible individuals. One, aspect of the dynamics of community interactions would be mutual interference among the interacting sub-populations, which in our model (2.1) is represented by the parameters  $m_1$ and $m_2$.  It is known  that mutual interference is a ‘stabilizing’ process (\cite{8,9,10}). A question of interest is to describe conditions leading to the persistence/survivability of interacting populations. We observe that the mutual interference introduces sub-linearities into the system (2.1) and it is known that initial value problems for systems with sub-linearities have continuable solutions but these solutions are not unique. (\cite{12,7}), and hence may not be regarded as a dynamical system (\cite{14, 15}). 
Our first result in this direction is to find suitable ranges for parameters  $m_1$ and $m_2$  so that the solutions of (2.1) form a dynamical system in the sense described above
\begin{theorem}
Consider the system of equations given by 
\begin{equation}
\dot{x_i}=g_i(x_1,x_2),
\end{equation}
where $x_i(0) \geq 0$, The functions $g_i: R^{+} \rightarrow R^{+}, R^{+}=[0,\infty)$   are continuous, for $i =1, 2 $,   that is $g_i \in C(R^{+})$. 
\end{theorem}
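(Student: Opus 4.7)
The statement as printed fixes only the data of the problem (a planar system $\dot x_i=g_i(x_1,x_2)$ with continuous $g_i:(R^+)^2\to R^+$ and $x_i(0)\ge 0$) and does not display an explicit conclusion, so I would read the theorem as asserting what the stated hypotheses naturally yield: local existence of a solution, positive invariance of the nonnegative orthant, and continuability to a maximal interval. My plan is to treat these three targets in order, since they are the generic conclusions accessible from continuity plus the sign condition $g_i\ge 0$, and they are exactly what Section~3's subsequent discussion of ``continuable solutions'' and ``dynamical system'' seems to require as a baseline.

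First I would rewrite the initial value problem in integral form, $x_i(t)=x_i(0)+\int_0^t g_i(x_1(s),x_2(s))\,ds$, and apply Peano's existence theorem on a closed box $B_r=\{(y_1,y_2):|y_i-x_i(0)|\le r\}$. Continuity of $g_i$ supplies a uniform bound $M=\max_{B_r}\max_i|g_i|$, and a standard Arzelà--Ascoli/Schauder fixed-point argument yields a continuous solution on $[0,\tau]$ with $\tau=r/M$. No Lipschitz hypothesis is in force, so uniqueness is deliberately not claimed at this step. Next, for positive invariance I would exploit $g_i\ge 0$ directly: from the integral form, $x_i(t)\ge x_i(0)\ge 0$ while the trajectory lies in $R^+\times R^+$, and a boundary argument (letting $t_\star=\inf\{t>0:x_i(t)<0\}$ and using continuity together with $\dot x_i(t_\star)\ge 0$) forecloses any exit through the axes. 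Continuation to a maximal interval $[0,T_{\max})$ then follows by the familiar Zorn/supremum argument applied to the partially ordered family of local extensions.

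The principal obstacle is precisely what the hypotheses as stated do \emph{not} deliver. Without any growth condition on the $g_i$, finite-time blow-up is not excluded, so I can only assert the standard dichotomy ``$T_{\max}=\infty$ or $\limsup_{t\uparrow T_{\max}}\|x(t)\|=\infty$''; global existence would require an additional a priori bound (linear growth, dissipativity, or a Lyapunov barrier). Likewise, without Lipschitz or one-sided monotonicity on $g_i$ the solution need not be unique, which is the very role the restrictions on $m_1,m_2$ are expected to play in the concrete model~(2.1) addressed later in the section. I would therefore frame the proposal as establishing the three conclusions listed above from the printed hypotheses alone, and flag the extra structural conditions that any stronger ``dynamical system'' conclusion must invoke.
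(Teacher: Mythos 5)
There is a genuine gap: your proposal proves a different (and weaker) statement than the one the paper intends. The theorem as printed is missing its hypotheses and conclusion --- the paper places them inside the proof environment --- but the surrounding text makes the target unambiguous: the paragraph immediately preceding the theorem notes that sublinear right-hand sides yield continuable solutions that are \emph{not unique}, ``and hence may not be regarded as a dynamical system,'' and announces that the result will find ranges of $m_1,m_2$ so that the solutions of (2.1) \emph{do} form a dynamical system. The conclusion you are asked to reach is therefore existence \emph{plus uniqueness plus two-sided continuability} under the structural hypotheses (H1) (continuity of $h_j(x_1,x_2)=x_j^{-m_j}g_j(x_1,x_2)$) and (H2) (continuity of $x_k^{m_k}\,\partial h_j/\partial x_k$). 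Your argument delivers Peano existence, invariance of the orthant, and maximal continuation, and then explicitly declines to address uniqueness on the grounds that no Lipschitz condition is available --- but restoring uniqueness in the absence of a Lipschitz condition is precisely the content of the theorem, and (H2) is the surrogate regularity hypothesis that makes it work. You correctly diagnosed where the difficulty lies, but diagnosing it is not the same as discharging it.

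Concretely, the paper's route is: (i) assert that under (H1)--(H2) the flow is a dynamical system in the sense of Erbe--Freedman, by a slight modification of a cited result; (ii) do the real work of applying this to (2.1) via the desingularizing substitution $u_j=x_j^{1-m_j}$, computing
\begin{equation*}
h_1=\beta x_2^{m_2}+x_1^{-m_1}S(x_1),\qquad h_2=\beta x_1^{m_1}-\gamma x_2^{-m_2}P(x_2),
\end{equation*}
checking that the limits as $x_j\to 0^{+}$ exist using the sign and monotonicity conditions (2.2) on $S$ and $P$, and extracting the admissibility condition $2m_k-1\ge 0$, i.e.\ $m_k\ge \tfrac12$, from (H2). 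None of this --- the transformation, the verification of (H1)--(H2), or the resulting parameter range --- appears in your proposal. A secondary point: your positive-invariance step leans on the literal reading $g_i\ge 0$, which is inconsistent with the model itself (there $\dot x_1=-\beta x_1^{m_1}x_2^{m_2}-S(x_1)\le 0$); the codomain in the printed statement is evidently a typo, so that part of your argument rests on a hypothesis the authors do not actually intend.
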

\begin{proof}
Assume that the following conditions are satisfied: \\
(H1) There exists constants $m_j>0$ such that $h_j \in C(R^{+})$ where $h_j(x_1,x_2)=x_j^{-m_j} g_j(x_1,x_2).$ \\
(H2) $x_k^{m_k}$ $\frac{\partial}{\partial x_k}$ $h_j(x_1, x_2) \in C(R^{+})$ \\

Then the solutions of the system (3.1) form a dynamical system in the sense of (\cite{8}). It is easy to see that the proof is a slight modification of a result of (\cite{4}).  We now apply the content of Theorem 3.1 to our model (2.1). Consider the following transformation of the variables for the system (2.1) given by
\begin{eqnarray}
u_1=x_1^{1-m_1} \\ \nonumber
u_2=x_2^{1-m_2}
\end{eqnarray}
\begin{eqnarray*}
x_1=u_1^{\frac{1}{1-m_1}} \\ 
x_2=u_2^{\frac{1}{1-m_2}}
\end{eqnarray*}
\begin{eqnarray}
x_1^{m_1}=u_1^{\frac{m_1}{1-m_1}} \\ \nonumber
x_2^{m_2}=u_2^{\frac{m_2}{1-m_2}}
\end{eqnarray}
Differentiating equation (3.2) 
\begin{eqnarray}
\dot{u_1}=(1-m_1) x_1^{m_1}\dot{x_1} \implies \dot{x_1}=\frac{1}{1-m_1} x_1^{m_1} \dot{u_1} \\ \nonumber
\dot{u_2}=(1-m_2) x_2^{m_2}\dot{x_2} \implies \dot{x_2}=\frac{1}{1-m_2} x_2^{m_2} \dot{u_2}
\end{eqnarray}
Substituting (3.4) in (2.1) and after some simplifications, we get
\begin{eqnarray}
\mathop {{{\dot u}_1}}\limits^{}  =  - (1 - {m_1})\mathop {}\limits_{} \left[ {\beta \mathop {}\limits_{} {u_2}^{\frac{{m{_2}}}{{1 - m{_2}}}} + {u_1}^{\frac{{ - m{ _1}}}{{1 - m{_1}}}}\mathop {}\limits_{} S({u_1}^{\frac{1}{{1 - m{ _1}}}})} \right] \\ \nonumber
\mathop {{{\dot u}_2}}\limits^{}  = (1 - {m_2})\mathop {}\limits_{} \left[ {\beta \mathop {}\limits_{} {u_1}^{\frac{{m{_1}}}{{1 - m{_1}}}} - \gamma \mathop {}\limits_{} {u_2}^{\frac{{ - m{_2}}}{{1 - m{_2}}}}\mathop {}\limits_{} P({u_2}^{\frac{1}{{1 - m{_2}}}})} \right]
\end{eqnarray}

In order to verify the hypotheses (H1) of Theorem 3.1, we need to show that \[\mathop {\lim }\limits_{{x_j} \to {0^ + }} {h_j}({x_1},{x_2})\]  exists, in which
\[\begin{array}{l}
{h_1}({x_1},{x_2}) = {x_1}^{ - m{_1}}\left[ {\beta \mathop {}\limits_{} {x_1}^{m{  _1}}{x_2}^{m{  _2}} + S({x_1})} \right] = \beta \mathop {}\limits_{} {x_2}^{m{ _2}} + {x_1}^{ - m{  _1}}\mathop {}\limits_{} S({x_1})\\
\\
{h_2}({x_1},{x_2}) = {x_2}^{ - m{  _2}}\left[ {\beta \mathop {}\limits_{} {x_1}^{m{  _1}}\mathop {}\limits_{} {x_2}^{m{  _2}} - \gamma \mathop {}\limits_{} P({x_2})} \right] = \beta \mathop {}\limits_{} {x_1}^{m{  _1}} - \gamma \mathop {}\limits_{} {x_2}^{ - m{  _2}}\mathop {}\limits_{} P({x_2})
\end{array}\]
From the assumptions (2.2) on the functions S and P it is clear that the hypothesis (H1) of Theorem 3.1 is verified. Also, it follows easily that the hypotheses (H2) of Theorem 3.1 are verified provided the parameters $m_k$  satisfies the inequalities \[2m{  _k} - 1 \ge 0\]. 
\end{proof}
Finally, an application of Theorem 3.1 yields the conclusion that solutions of the model (2.1) describe a dynamical system provided \[m{ _k} \ge \frac{1}{2}\] . Henceforth, we designate the inequalities \[m{  _k} \ge \frac{1}{2}\]  as admissible values of the parameters $m_1$ and $m_2$.
\section{Neural Network Architecture}
As mentioned above, a question of practical interest has been to determine the rate at which the susceptible  become  infective. In this section a new neural network architecture with back-propagation algorithm is proposed.

The training data is obtained by solving the model equations (2.1) using Matlab. A simple neural network without a proper preprocessing will not converge due to the non homogeneity in the data sets.  Therefore a preprocessing step with K-means clustering is introduced to create homogeneous data sets before training the network. 
The neural network is trained  using Xlminer Microsoft Excel 2003 plug-in.
\subsection{Methodology}
The k-means clustering works on the expectation of maximization algorithm to find the centers of natural clusters in the data. It assumes that the object attributes form a vector space. The objective is to minimize total intra-cluster variance, or, the squared error function
\[V = {\sum\limits_{i = 1}^k {\sum\limits_{{x_j} \in {S_i}} {\,\,{{\left[ {{x_j} - {\mu _i}} \right]}^2}} } ^{}}\]
where there are $k$ clusters $S_i,  i = 1, 2, \ldots, k$ and $\mu_i$ is the centroid or mean point of all the points $x_j \in S_i$. The initial number of clusters is specified as three based on the visual inspection of the data sets, for generating clusters from the data using k-means clustering. 
The following Figure 1 illustrates the operating  mechanism for training a neural network. The neural network architecture is depicted in  Figure 2.
\begin{figure}[h!]
  \caption{Operating Mechanism forof the proposed Learning paradigm for cooperative networks.}
  \centering
    \includegraphics[width=0.5\textwidth]{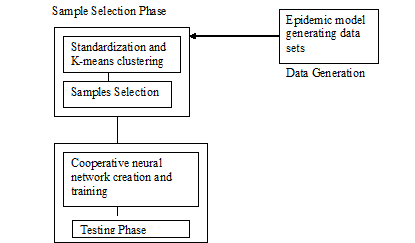}
\end{figure}

\begin{figure}[h!]
  \caption{Cooperative and supportive multi layer feed forward neural network architecture for computing susceptible for a given infective population.}
  \centering
    \includegraphics[width=0.5\textwidth]{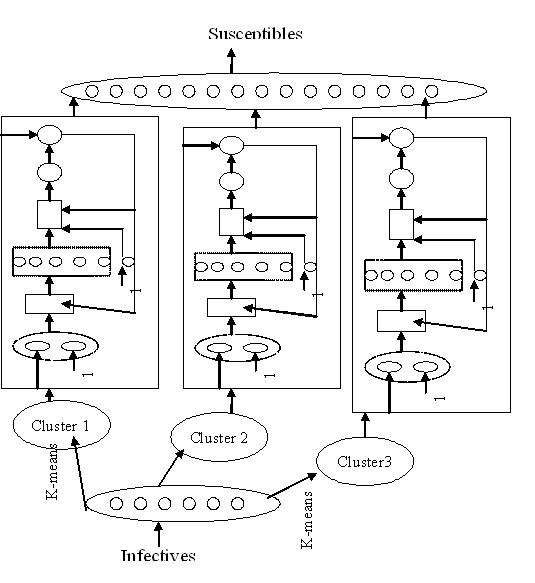}
\end{figure}
K-means clustering is a parametric technique where it is required to provide the number of clusters as a parameter. To know how well-separated the resulting clusters are, a Silhouette plot is constructed using cluster indices output from k-means.
\begin{figure}[h!]
  \caption{Silhouette plot with five, four and three clusters generated using k-means clustering.}
  \centering
    \includegraphics[width=0.5\textwidth]{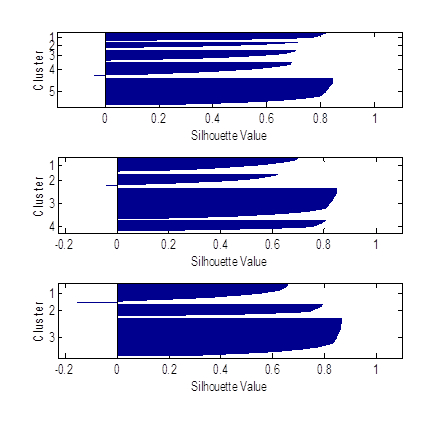}
\end{figure}

The Silhouette plot displays a measure of how close each point in one cluster is to points in the neighbouring clusters. This measure ranges from +1, indicating points that are very distant from neighbouring clusters, through 0, indicating points that are not distinctly in one cluster or another, to -1, indicating points that are probably assigned to the wrong cluster. From the Figure 3 it is derived that the optimal number of clusters is three as silhouette plots of four and five clusters shows very low Silhouette values.

\subsection{Initial Training Procedure}
Initially the entire susceptible and infective populations are standardized to mean zero and standard deviation one.  Using k-means clustering algorithm the data sets are clustered. The set of susceptible and infective populations in each cluster is then partitioned in to training data and testing data. The training data is given as input to the neural network. When the training is complete the testing data is given as input and outputs are computed. 

The three layer neural network architecture with one input, five hidden neurons and one output is considered for training. Each layer has one bias node. The tan hyperbolic function is used as a threshold function for each of the neurons. The network is trained using the back propagation learning algorithm with momentum, which has been found effective. The procedure and the network architecture designed are shown in Figure 4. The data used for training is generated from the mathematical model (2.1) through numerical solutions using Matlab.  

The network is trained to predict the number of susceptible population given the number of infective population to the input neuron. The purpose of predicting susceptible is to estimate the rate of spread of the epidemic. During the training phase the error generated due to differences in the predicted and actual susceptible populations is propagated in the backward direction and weights are adjusted. The network is trained once the mean square error reaches a defined required value. When the training is complete the infective in the testing data set is given as input and the susceptible are computed.  Figure 5 depicts the structure of the trained net work which computes the susceptible population with infective population as input. 

\begin{figure}[h!]
  \caption{Neural network architecture for training.}
  \centering
    \includegraphics[width=0.5\textwidth]{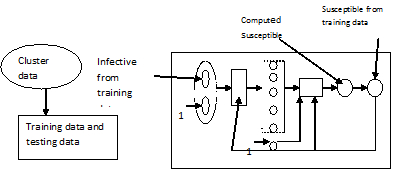}
\end{figure}

\begin{figure}[h!]
  \caption{Testing procedure for the estimation of the susceptible given the infective as in input.}
  \centering
    \includegraphics[width=0.5\textwidth]{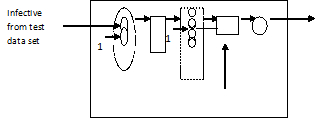}
\end{figure}
The training and testing are done for each of the clusters and the susceptible are computed for the test data sets for all the clusters.

\section{Simulations}
The first part of this section deals with the results of the model (2.1) simulated for various admissible values of the parameters  $m_1$  and $m_2$  .  Model 1 considers a situation in which sub linearity between the susceptible and infective populations is not taken into account and with fixed vaccination effort.

\subsection{Model 1}
This example deals with the case in which $m{ _1} = m{_2} = 1, \quad  \,u = 10,\,\,\,\,\beta  = 0.0001,  \,\quad \gamma  = 0.8,  $ Susceptible population = 10,000, 
and infective population = 10.  This corresponds to the model studied in (\cite{1})
\[\begin{array}{l}
\mathop {\dot x{  _1}}\limits^{}  =  - \beta \mathop {}\limits_{} {x_1}{x_2}^{} - u,\\
\mathop {\dot x{ _2}}\limits^{}  = \beta \mathop {}\limits_{} {x_1}^{}\mathop {}\limits_{} {x_2}^{} - \gamma {x_2}
\end{array}\]
Figure 6 shows the interactions between the susceptible and the infective populations generated using Matlab software.
\begin{figure}[h!]
  \caption{Interactions between susceptible and infective population using Model 1.}
  \centering
    \includegraphics[width=0.5\textwidth]{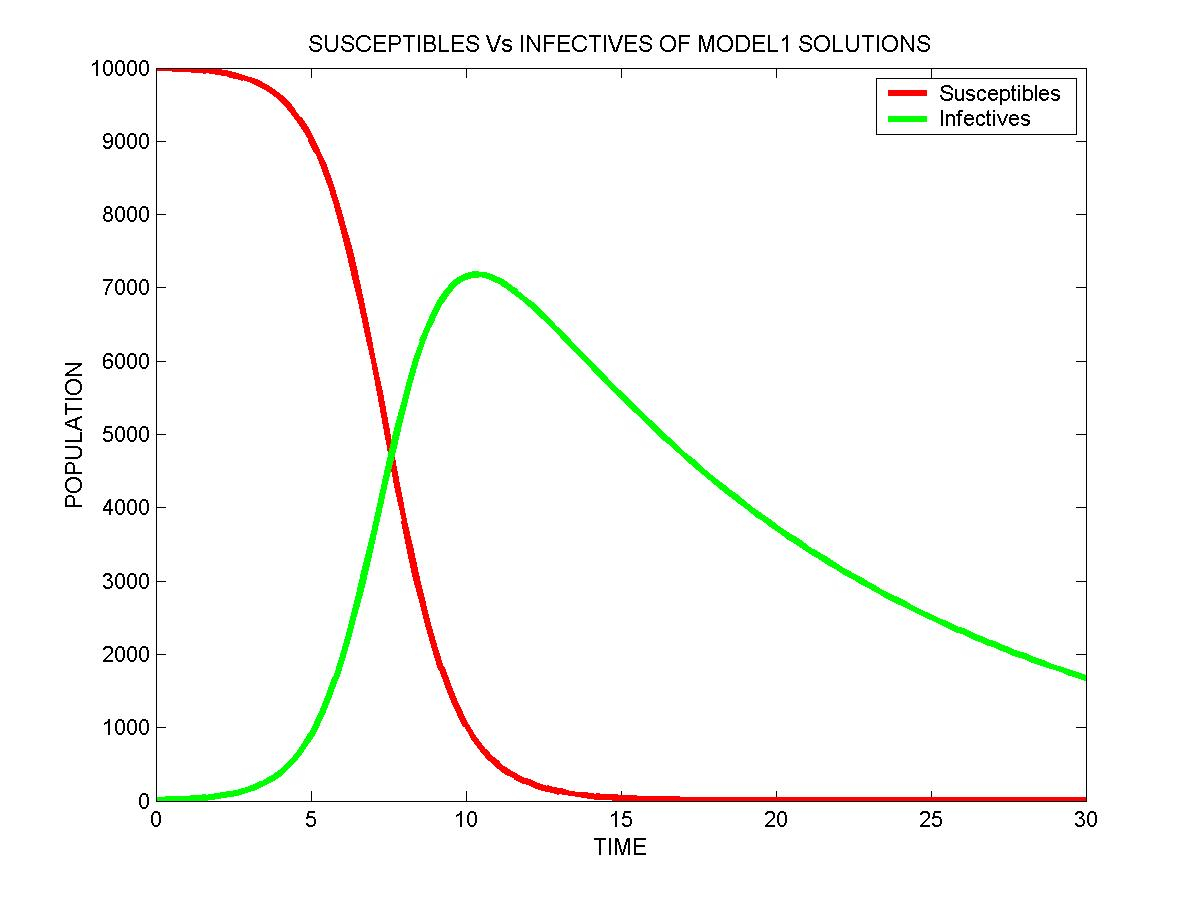}
\end{figure}

\subsection{Comparison of Neural Network and Regression   Analysis for  Model 1}
We have applied statistical methods such as regression analysis for Model 1 and compared the performance of the neural network with the statistical methods.
Neural Network architecture with configuration 1-5-1 is designed.  When k-means clustering is applied to the input data three clusters are obtained. Each individually clustered data is given as input to the three neural networks for training using  back-propagation with momentum learning algorithm. Also a linear and quadratic regression analysis is carried and the results are given in Figure 7.
\begin{figure}[h!]
  \caption{Susceptible individuals computed from neural network and regression analysis giving infective as inputs for Model 1.}
  \centering
    \includegraphics[width=0.5\textwidth]{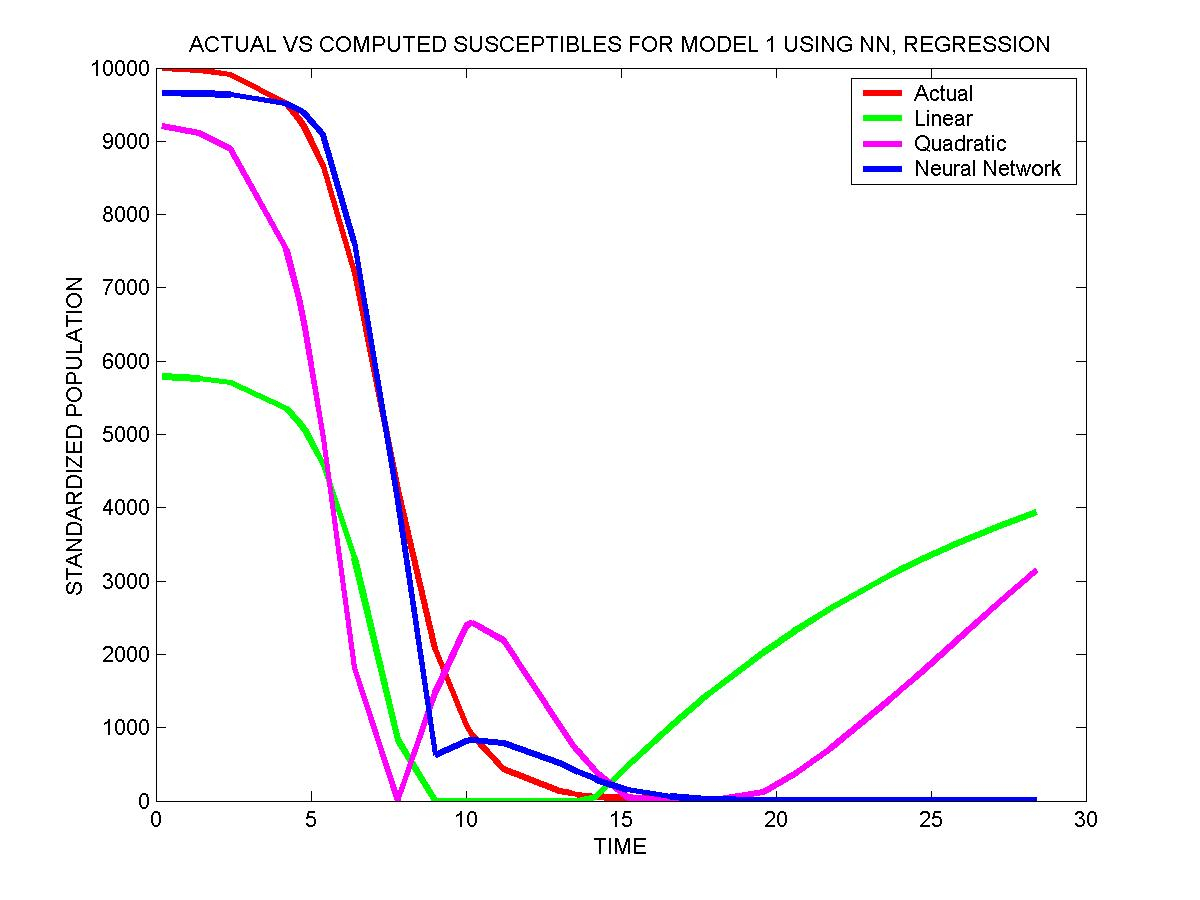}
\end{figure}

Figure 7 shows the actual susceptible and the susceptible estimated from the neural network and regression analysis. It may be seen that the estimation by the  neural network is better when compared with that obtained by using regression method. 

\subsection{Model 2}
\[\begin{array}{l}
\mathop {\dot x{  _1}}\limits^{}  =  - \beta \mathop {}\limits_{} {x_1}^{m{ _1}}\mathop {}\limits_{} {x_2}^{m{  _2}} - S({x_1})\\
\mathop {{{\dot x}_2}}\limits^{}  = \beta \mathop {}\limits_{} {x_1}^{m{\,_1}}\mathop {}\limits_{} {x_2}^{m{  _2}} - \mathop \gamma \limits_{} P({x_2})
\end{array}\]
This model is an improvement over the Model1 in the sense that the vaccination effort is based on the infection rate and is dynamic in character. Also, the interactions between the populations are not necessarily linear. This model has the following parameters \[m{ _{\rm{1}}} = 0.{\rm{8,}}\] $m{ _2} = 0.7,$ \[s(x) = ({x_1}^{0.4}/(vaccination\,effort + {x_1}^{0.4}))\] \[P(x) = {x_1}^{1.2},\] $\beta  = 0.01,\quad \gamma  = 0.04,$ initial susceptible = 1000, and initial infective = 10.

The interactions between the susceptible and infective populations are shown in the Figure 8 which has been generated using Matlab software. The vaccination effort has rendered decline in the infective population well ahead, as observed in Model 1. This clearly establishes that the vaccination effort given by Model 2 is more realistic.
\begin{figure}[h!]
  \caption{Interactions between susceptible and infective individuals during the spread of the epidemic using numerical simulation in Matlab for Model 2.}
  \centering
    \includegraphics[width=0.5\textwidth]{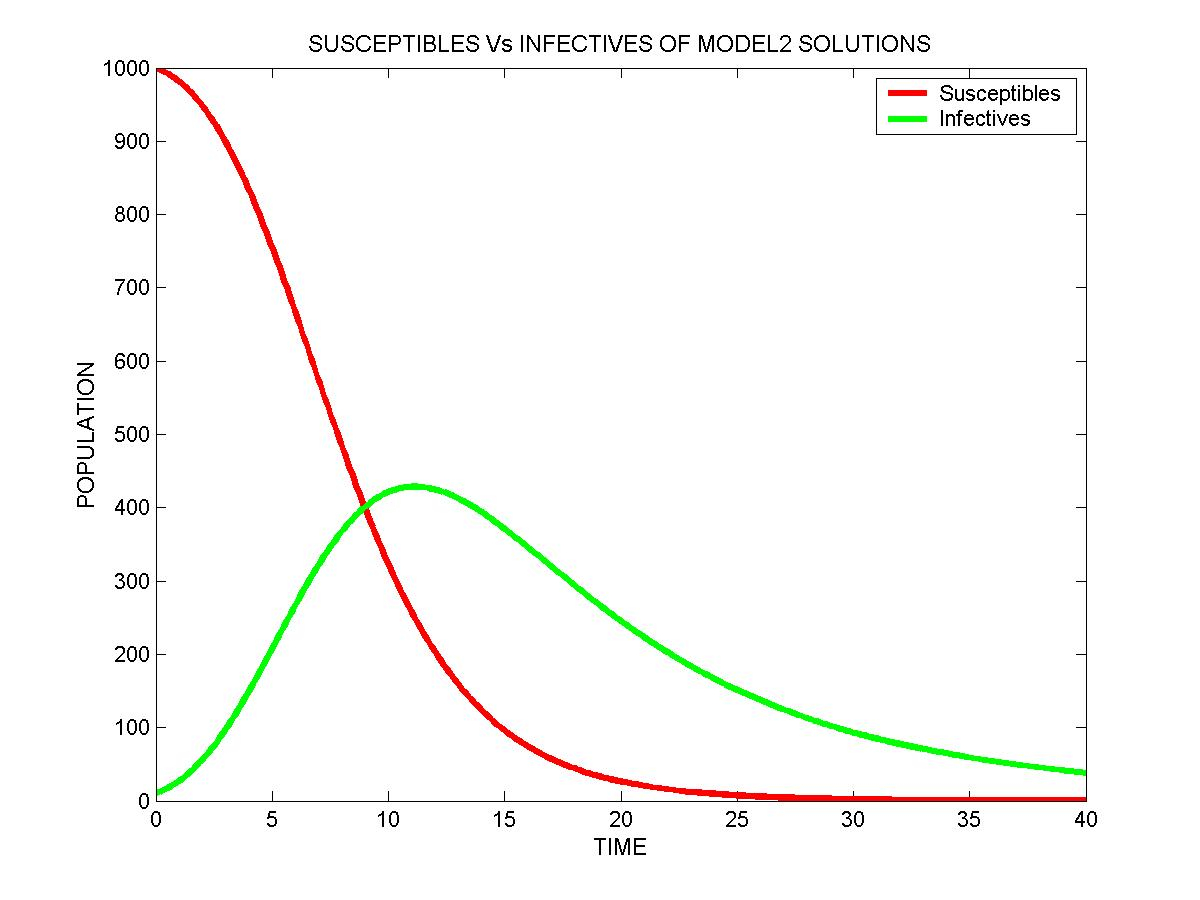}
\end{figure}

\subsection{Neural Network Training with Infective as Input and Susceptible  as expected Output for Model 2}
Neural network architecture similar to the architecture discussed in Section 5.2  is considered for training the network. Linear and quadratic regression analyses are carried out on this data and the results are plotted in the Figure 9. From the Figure 9 it may be observed that the neural network estimation of susceptible is better than those obtained by the regression method.
\begin{figure}[h!]
  \caption{Susceptible individuals computed from neural network and regression analysis with  infective population as input for Model 2.}
  \centering
    \includegraphics[width=0.5\textwidth]{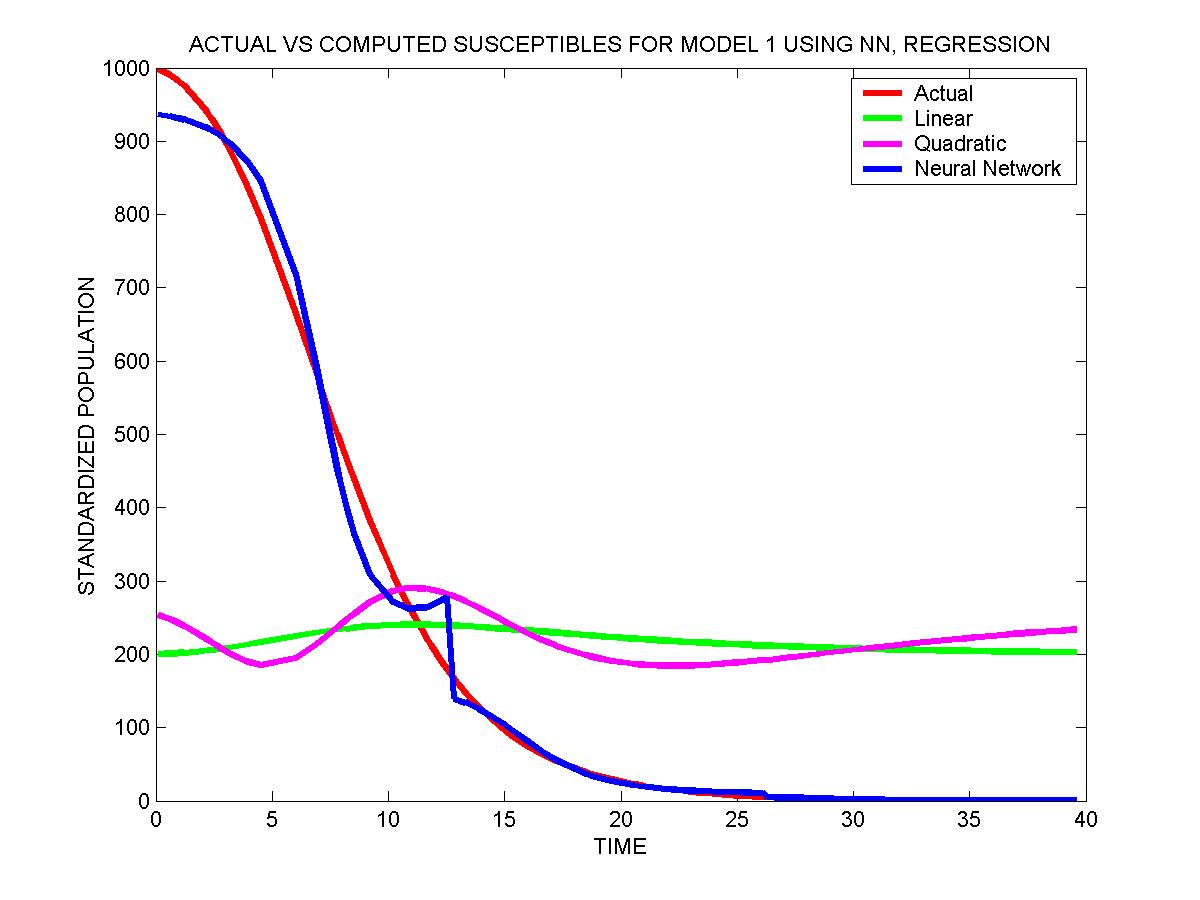}
\end{figure}

\section{Estimation of the rate of Spread}
In the section, we employ the neural network architecture and estimate the rate of spread of the epidemic and compare the same with the actual calculated rate. The rate is the product \[\beta \mathop {}\limits_{} {x_1}^{m{ _1}}\mathop {}\limits_{} {x_2}^{m{ _2}}\].
\subsection{Model 1}
This example shows the rate calculated from a neural network and the actual calculated rate (\[\beta \mathop {}\limits_{} {x_1}^{{m_1}}\mathop {}\limits_{} {x_2}^{{m_2}}\] ) are in good agreement than that obtained from polynomial regression. Simulations are conducted with the following parameter values  $m{ _1} = m{ _2} = 1,\quad \beta  = 0.001.$

The Figure 10 shows that the rate estimated by the neural network is very close to the  actual rate compared to the one estimated by using the regression methods.
\begin{figure}[h!]
  \caption{Estimated rate from neural network, regression analysis and actual  calculated rate of spread of epidemics Model 1.}
  \centering
    \includegraphics[width=0.5\textwidth]{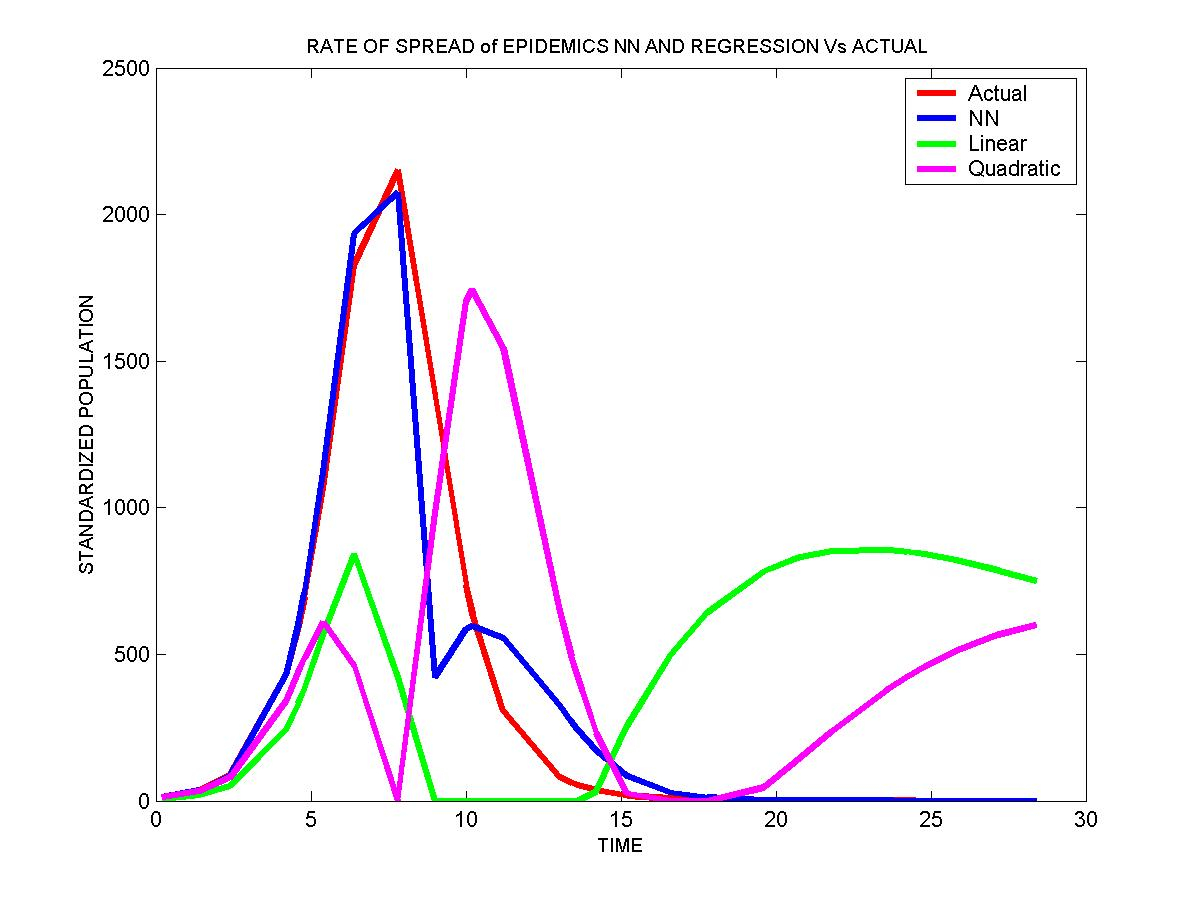}
\end{figure}

\subsection{Model 2}
In this example Model 2 is considered with mutual interference parameters and the rate of spread of the epidemic is calculated  from the neural network and also the polynomial regression method for the following values of the parameters $m{ _1} = 0.8,\,\;m{  _2} = 0.7,\quad \beta  = 0.01.$
\begin{figure}[h!]
  \caption{Estimated rate from neural network, regression analysis and actual calculated rate of spread of epidemics.}
  \centering
    \includegraphics[width=0.5\textwidth]{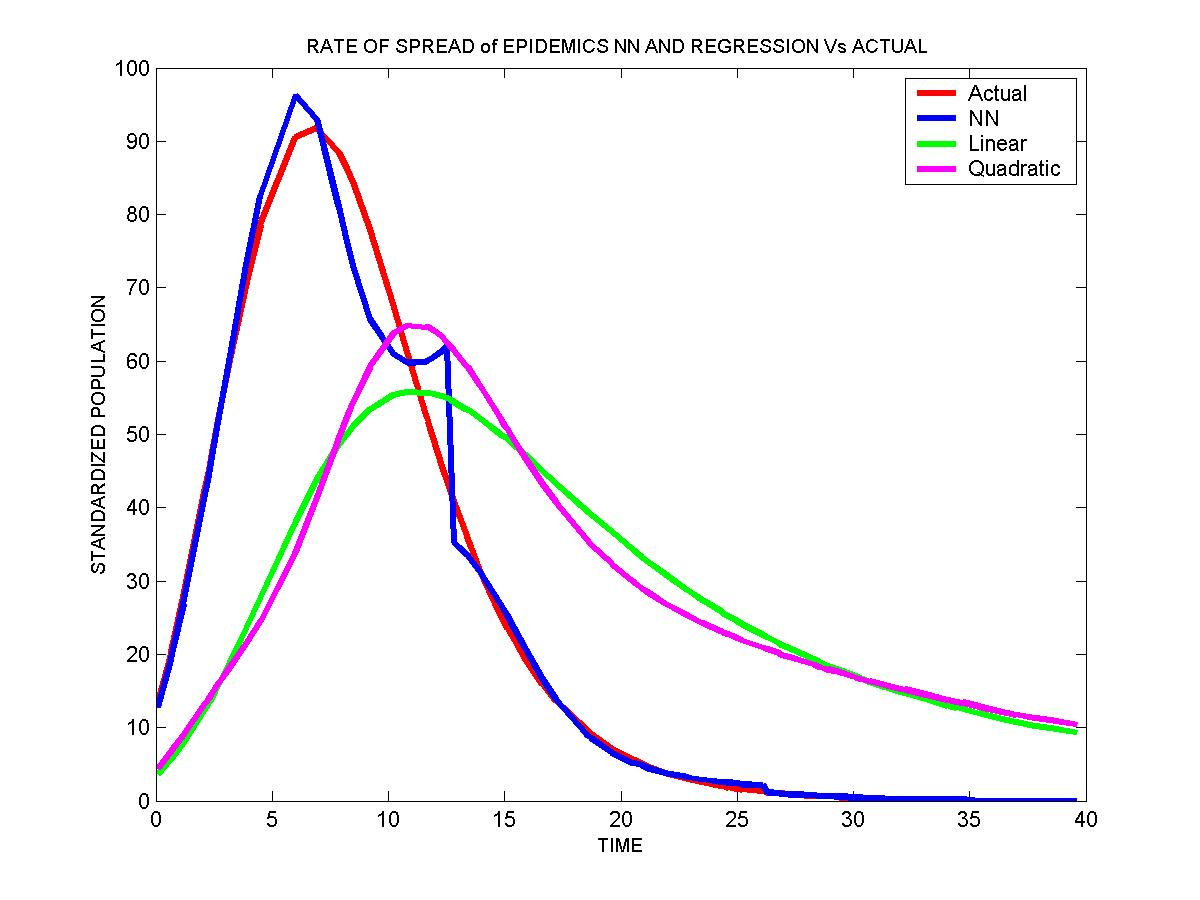}
\end{figure}

In all these examples it is observed that the results obtained through the neural network are closer to the actual results than those obtained by using the polynomial regression analysis.

\section{Conclusions and Discussion}
This work attempts to estimate the rate of spread of an epidemic under realistic conditions, which is a problem of immense concern for the field scientists. It has been realized that a realistic mathematical model is essential to deal with this problem. Accordingly, a mathematical model has been proposed and certain mathematical questions such as the existence and uniqueness of continuable solutions to the model equations (which ensure that the system describes a dynamical system) have been discussed. Following a new learning paradigm using k-means and cooperative neural networks the rate of the spread of epidemic has been estimated. Results of earlier work have been derived from the present work. Simulation results exhibit the decline in both susceptible and infective populations with increased control effort; there by implying that the populations contracting the epidemic are cured and/or the vaccination effort makes them immune. A non-linear regression analysis is also carried out for all the models and it is concluded that the non-linearity in the data is better adapted by the neural network approach than the regression analysis. The neural network out performs the regression analysis by predicting the rate very close to the actual rate. It is hoped that this work paves way for better understanding of the simple epidemic phenomenon.

\bibliographystyle{elsarticle-num}

\begin{thebibliography}{00}
\bibitem{1}
G.E. Antoniou and S. Mentzelopoulou, Neural Networks: An Application to the Epidemics, Proceedings of Neural, Parallel and Scientific Computations, 1(1995), 
18-21.
\bibitem{2}
 N.J.T. Bailey, “ The Mathematical Theory of Infectious Diseases and its Applications ”, London, Griffin, 1975.
\bibitem{3}
 N.Boccara and K.Cheong, Automata network SIR models for the spread of infectious diseases in populations of moving individuals, J. Phys. A. 25(1992), 2447—2461.
\bibitem{4}
 F. Brauer and C Castillo-Chavez, “ Mathematical Models in Population Biology and Epidemiology ” , Berlin, Springer- Verlag 2001.
\bibitem{5}
 N.H. Mc Clamroch, “ State models of dynamic systems-A case study approach ”, Springer-Verlag, New York, 1980.
\bibitem{6}
 W.A. Coppel , “ Stability and Asymptotic Behaviour of Differential Equations ”, D.C. Heath, Boston,1965.
\bibitem{7}
 O. Diekmann and J. Heesterbeek, “Mathematical Epidemiology of Infectious disease”, Wiley Series in Mathematical and Computational Biology, Chichester, Wiley, 2000.
\bibitem{8}
 L.H. Erbe and H.I. Freedman, Modeling persistence and mutual interference among subpopulations of ecological communities, Bull. Math. Biol., 47(1985), 
295-304.
\bibitem{9}
 H.I. Freedman, Stability analysis of a predator-prey system with mutual interference and density-dependant death rates, Bull. Math. Biol., 41(1979), 67-78.
\bibitem{10}
 H.I. Freedman and V. Sree Hari Rao, The tradeoff between mutual interference and time lags in predator-prey Systems, Bull. Math. Biol., 45(1983), 991-1004.
\bibitem{11}
 P. Hartman, “ Ordinary Differential Equations ”, John Wiley \& Sons, 
New York ,1964.
\bibitem{12}
 Hema Chadrasekaran, Jiang Li, W.H. Delashmit, P.L. Narasimha, Changhua yu, and Micheal T. Manary, Convergent design of piecewise linear neural networks, Neurocomputing 70 (2007), 1022-1039.
\bibitem{13}
 Marc Choisy, Jean-Franc¸ois Gu´egan, Pejman Rohani, Dynamics of infectious diseases and pulse vaccination: Teasing apart the embedded resonance effects, Elsiver, Physica D, 223 (2006), 26–35.
\bibitem{14}
 V.V. Nemytskii and V.V. Stepanov, “Qualitative Theory of Differential Equations”, Princeton: Princeton University Press ,1960.
\bibitem{15}
 L.M. Perko, “Differential Equation and Dynamical Systems”, Springer-Verlag, New York, 1996.
\bibitem{16}
 Shujing Gao, Lansun Chen, Juan J. Nieto and Angela Torres, Analysis of a delayed epidemic model with pulse vaccination and saturation incidence, Vaccine 24 (2006), 6037–6045.
\bibitem{17}
 V. Sree Hari Rao and P. Raja Sekhara Rao, Cooperative and supportive neural networks , Physics Letters A 371 (2007), 101-110.
\bibitem{18}
 V. Sree Hari Rao and K. Venkata Ratnam, Multi parameter dynamic optimization algorithms and application to a problem of bioinformatics relating to the spread of an epidemic, Electronic modeling, 26(2004), 105-116
\bibitem{19}
 P. Waltman,“ Deterministic Threshold Models in the Theory of Epidemics ”, Springer-Verlag, Hidelberg, 1974.


\end{thebibliography}

\end{document}